\newcommand{\keywords}[1]{\par\addvspace\baselineskip
\noindent\keywordname\enspace\ignorespaces#1}
\begin{document}

\mainmatter  

\title{Transductive-Inductive Cluster Approximation Via Multivariate Chebyshev Inequality}

\titlerunning{TI Cluster Approximation Via Multivariate Chebyshev Inequality}

%
%
\author{Shriprakash Sinha
}
\authorrunning{Sinha}

\institute{Tu Delft, Dept. of Mediamatics, Faculty of EEMCS, Mekelweg
  4,\\
2628 CD Delft, The Netherlands\\
\mailsa\\
}

%
%

\maketitle

\begin{abstract}
Approximating adequate number of clusters in multidimensional data is
an open area of research, given a level of compromise made on the
quality of acceptable results. The manuscript addresses the issue by
formulating a transductive inductive learning algorithm which uses
multivariate Chebyshev inequality. Considering clustering problem in
imaging, theoretical proofs for a particular level of compromise are
derived to show the convergence of the reconstruction error to a
finite value with increasing (a) number of unseen examples and (b) the
number of clusters, respectively. Upper bounds for these error rates
are also proved. Non-parametric estimates of these error from a random
sample of sequences empirically point to a stable number of
clusters. Lastly, the generalization of algorithm can be applied to
multidimensional data sets from different fields.
\keywords{Transductive Inductive Learning, Multivariate Chebyshev Inequality} 
\end{abstract}

\section{Introduction}
\label{sec:intro}
The estimation of clusters has been approached either via a batch
framework where the entire data set is presented and different
initializations of seed points or prototypes tested to find a model of
cluster that fits the data like in $k$-means \cite{Kanungo:2002} and
fuzzy $C$-means \cite{Bezdek:1981} or an online strategy clusters are approximated as
new examples of data are presented one at a time using variational
Dirichlet processes \cite{Gomes:2008} and incremental clustering based
on randomized algorithms \cite{Charikar:1997}. It is widely known that
approximation of adequate number of clusters using a multidimensional
data set is a open problem and a variety of solutions have been
proposed using Monte Carlo studies \cite{Dubes:1987},
Bayesian-Kullback learning scheme in mean squared error setting or
gaussian mixture \cite{Xu:1996}, model based approaches
\cite{Fraley:1998} and information theory \cite{Still:2004}, to cite a few. \par 
This work deviates from the general strategy of defining the number of
clusters apriori. It defines a level of compromise, tolerance or
confidence in the quality of clustering which gives an upper bound on
the number of clusters generated. Note that this is not at all similar
to defining the number of clusters. It only indicates the level of
confidence in the result and the requirement still is to estimate the
adequate number of clusters, which may be way below the bound. The
current work focuses on dealing with the issue of approximating the
number of clusters in an online paradigm when the confidence level has
been specified. In certain aspects it finds similarity with the
recent work on conformal learning theory \cite{Vovk:2005} and presents
a novel way of finding the approximation of cluster with a degree of
confidence.\par
Conformal learning theory \cite{Shafer:2008}, which
has its foundations in employing a transductive-inductive paradigm
deals with the idea of estimating the quality of predictions made on
the unlabeled example based on the already processed
data. Mathematically, given a set of already processed examples
($x_{1}, y_{1}$), ($x_{2}, y_{2}$), ..., ($x_{i-1}, y_{i-1}$), the
conformal predictors give a point prediction $\hat{y}$ for the unseen
example $x_{i}$ with a confidence level of
$\Gamma^{\varepsilon}$. Thus it estimates the confidence in the
quality of prediction using the original label $y_{i}$ after the
prediction has been made and before moving on to the next unlabeled
example. These predictions are made on the basis of a non-conformity
measure which checks how much the new example is different from a bag
of already seen examples. A bag is considered to be a
finite sequence $\mathcal{Z}$ ($z_{1}, z_{2}, ..., z_{i-1}$) of
examples, where $z_{i} = (x_{i}, y_{i})$. Then using the idea of
\emph{exchangeability}, it is known from \cite{Vovk:2005}, under a
weaker assumption that for every positive integer $i$, every
permutation $\pi$ of $\{1,2,...,i\}$, and every measurable
set $E \subset \mathcal{Z}^{i}$, the probability
distribution $P\{(z_{1}, z_{2}, ...) \in \mathcal{Z}^{\infty}$ :
$(z_{1}, z_{2}, ..., z_{i}) \in E\}$ = $P\{(z_{1}, z_{2}, ...) \in
\mathcal{Z}^{\infty}$ : $(z_{\pi(1)}, z_{\pi(2)}, ..., z_{\pi(i)}) \in
E\}$. A prediction for the new example $x_{i}$ is made if and only if
the frequency (p-value) of exchanging the new example with another
example in the bag is above certain value. \par
This manuscript finds its motivation from the foregoing theory of online
prediction using transductive-inductive paradigm. The research work
applies the concept of coupling the creation of new clusters via
transduction and aggregation of examples into these clusters via
induction. It finds its similarity with \cite{Shafer:2008} in
utilizing the idea of prediction region defined by a certain level of
confidence. It presents a simple algorithm that differs significantly
from conformal learning in the following aspect: (1) Instead of
working with sequences of data that contain labels, it works on
unlabeled sequences. (2) Due the first formulation, it becomes
imperative to estimate the number of clusters which is not known
apriori and the proposed algorithm comes to rescue by employing a
Chebyshev inequality. The inequality helps in providing an upper bound
on the number of clusters that could be generated on a random sample
of sequence. (3) The quality of the prediction in conformal learning
is checked based on the p-values generated online. The current
algorithms relaxes this restriction in checking the quality online and
just estimates the clusters as the data is presented. (4) The
foregoing step makes the algorithm a weak learner as it is sequence
dependent. To take stock of the problem, a global solution to the
adequate number of cluster is approximated by estimating kernel
density estimates on a sample of random sequences of a data. Finally,
the level of compromise captured by a parameter in the inequality
gives an upper bound on the number of clusters generated. In case of
clustering in static images, for a particular parameter value,
theoretical proofs show that the reconstruction error converges to a
finite value with increasing (a) number of unseen examples and (b) the
number of clusters. Empirical kernel density estimates of
reconstruction error over a random sample of sequences on toy examples
indicate the number of clusters that have high probability of low
reconstruction error. \emph{It is not necessary that labeled data are
always present to compute the reconstruction error}. In that case the
proposed algorithm stops short at density estimation of approximated
number of clusters from a random sequence of examples, with a certain
degree of confidence.\par
Another dimension of the proposed work is to use the generalization of
multivariate formulation of Chebyshev inequality \cite{Berge:1938},
\cite{Lal:1955}, \cite{Marshall:1960}. It is known that Chebyshev inequality
helps in proving the convergence of random sequences of different
data. Also the multivariate formulation of the Chebyshev inequality
facilitates in providing bounds for multidimensional data which is
often afflicted by the curse of dimensionality making it difficult to
compute multivariate probabilities. One of the generalizations that
exist for multivariate Chebyshev inequality is the consideration of
probability content of a multivariate normal random vector to lie in
an Euclidean $n$-dimensional ball \cite{Monhor:2005},
\cite{Monhor:2007}. This work employs a more conservative approach is
the employment of the Euclidian $n$-dimensional ellipsoid which
restricts the spread of the probability content \cite{Chen:2007}. Work
by \cite{Kanungo:2002} and \cite{Sinha:2011} provide motivation in
employment of multivariate Chebyshev inequality. \par
Efficient implementation and analysis of $k$-means clustering using the multivariate Chebyshev
inequality has been shown in \cite{Kanungo:2002}. The current work
differs from $k$-means in (1) providing an online setting to the
problem of clustering (2) estimating the number of  clusters for a
particular sequence representation of the same data via convergence
through ellipsoidal multivariate Chebyshev inequality, given the level of confidence,
compromise or tolerance in the quality of results (3) generating
global approximations of number of clusters from non-parametric
estimates of reconstruction error rates for sample of random sequences
representing the same data and (4) not fixing the cluster number
apriori. It must be noted that in $k$-means, the solutions may be
different for different initializations for a particular value of $k$
but the value of $k$ as such remains fixed. In the proposed work, with
a high probability, an estimate is made regarding the minimum number
of clusters that can represent the data with low reconstruction
error. This outlook broadens the perspective of finding multiple
solutions which are upper bounded as well as approximating a
particular number of cluster which have similar solutions. This
similarity in solutions for a particular number of cluster is
attributed to the constraint imposed by the Chebyshev inequality. A
key point to be noted is that using increasing levels of compromise or
confidence as in conformal learners, the proposed work generates a
nested set of solutions. Low confidence or compromise levels generate
tight solutions and vice versa. Thus the proposed weak learner
provides a set of solutions which are robust over a sample. \par
This manuscript also extends the work of \cite{Sinha:2011} on
employment of multivariate Chebyshev inequality for image representation. Work in
\cite{Sinha:2011} presents a hybrid model based on Hilbert space
filling curve \cite{Hilbert:1891} to traverse through the image. Since this
curve preserve the local information in the neighbourhood of a pixel,
it reduces the burden of good image representation in lower
dimensions. On the other side, it acts as a constraint on processing
the image in a particular fashion. The current work removes this
restriction of processing images via the space filling curves by
considering any pixel sequence that represents the image under
consideration. Again, a single sequence may not be adequate enough for the
learner to synthesize the image to a recongnizable level. This
can be attributed to the fact that in an unsupervised paradigm the
number of clusters are not known apriori and also the learner
would be sequence dependent. To reiterate, the proposed work addresses
the issues of $\bullet$ \emph{recognizability}, by defining a level of
compromise that a user is willing to make via the Chebyshev parameter
$\mathcal{C}_{p}$ and $\bullet$ \emph{sequence specific solution}, by
taking random samples of pixel sequences from the same image. The
latter helps in estimating a population dependent solution which would
be robust and stable synthesis. Regularization of these error over
approximated number of clusters for different levels of compromise
leads to an adequate number of clusters that synthesize the image with
minimal deviation from the original image. \par
Thus the current work provides a new perspective in approximation of
cluster number at a particular confidence level. To test the
propositions made, the problem of clustering in images is taken into
account. Generalizations of the algorithm can be made and applied to
different fields involving multidimensional data sets in an online
setting. Let $\mathcal{I}$ be an RGB image. A pixel in $\mathcal{I}$
is an example $x_{i}$ with $\mathcal{N}$ dimensions (here $\mathcal{N}$ =
3). It is assumed that examples appear randomly without repetition for
the proposed unsupervised learner. Note that when the sample space
(here the image $\mathcal{I}$) has finite number of examples in it
(here $\mathcal{M}$ pixels), then the total number of unique sequences
is $\mathcal{M}!$. When $\mathcal{M}$ is large, $\mathcal{M}!
\rightarrow \infty$. Currently, the algorithm works on a subset of unique
sequences sampled from $\mathcal{M}!$ sequences. The probability of a
sequence to occur is equally likely (in this case
$1/\mathcal{M}$). RGB images from the Berkeley Segmentation Benchmark
(BSB) \cite{Martin:2001} have been taken into consideration for the
current study. \par
\section{Transductive-Inductive Learning Algorithm}
\label{sec:tila}
Given that the examples ($z_{i} = x_{i}$) in a sequence appear
randomly, the challenge is to (1) learn the association of a
particular example to existing clusters or (2) create a new cluster,
based on the information provided by already processed examples. The
current algorithm handles the two issues via (1) evaluation of a
nonconformity measure defined by multivariate Chebyshev's inequality
formulation and (2) $1$-Nearest Neighbour (NN) transductive learning,
respectively. The multivariate formulation of the generalized Chebyshev
inequality \cite{Chen:2007} is applied to a new example using a single
Chebyshev parameter. This inequality tests the deviation of the new
example from the mean of a cluster of examples and gives a lower
probabilistic bound on whether the example belongs to the cluster
under investigation. If the new random example passes the test, then
it is associated with the cluster and the mean and covariance matrix
for the cluster is recomputed. In case there exists more than one
cluster which qualify for association, then the cluster with lowest
deviation to the new example is picked up for association. It is also
possible to assign the new example to a random chosen cluster from the
selected clusters to induce noise and then check for the
approximations on the number of cluster. This has not been considered
in the current work for the time being. In case
of failure to find any association, the algorithm employs $1$-NN
transductive algorithm to find a closest neighbour of the current
example under processing. This neighbour together with the current
example forms a new cluster. \par
\begin{algorithm}[!t]
\caption{Unsupervised Learner} \label{alg:final}
\begin{algorithmic}[1]
\Procedure{Unsupervised Learner}{$img$, $\mathcal{C}_{p}$}
\State [$nrows$, $ncols$] $\gets$ size($img$)
\State $\mathcal{M}$ $\gets$ $nrows \times ncols$ \Comment{Total no. of unseen examples}
\State $pt_{cntr}$ $\gets$ 0 \Comment{Number of examples encountered}
\State $pt_{idx}$ $\gets$ $\{1, 2, ..., \mathcal{M}\}$ \Comment{Total no. of indicies of unseen examples}
\Statex Initialize Variables
\State $cluster_{cntr}$ $\gets$ 0 \Comment{Number of clusters}
\State $CumErr_{val}$ $\gets$ 0 \Comment{Cummulative value}
\State $Err_{1} \gets$ [] \Comment{Error rate as no. of examples increase}
\State $Err_{2} \gets$ [] \Comment{Error rate as no. of clusters increase}
\While{Card($pt_{idx}$) examples remain unprocessed} \Comment{$pt_{idx} \subset \{1, 2, ..., \mathcal{M}\}$}
     \State Choose a random example $x_{i}$ s.t. $i \in pt_{idx}$
     \State $pt_{cntr}$ $\gets$ $pt_{cntr} + 1$
     \State Update $pt_{idx}$ i.e $pt_{idx}$ $\gets$ $pt_{idx} - \{i\}$
     \State $CRITERION$ $\gets$ []
     \State $Err_{val}$ $\gets$ 0
     \State $\forall q$ clusters were $q \in \{0, 1, 2, ..., cluster_{cntr}\}$
     \State \hspace{0.5cm} $Err_{val}$ $\gets$ $\sum_{k = 1}^{\ell} (x_{k} - \mathbf{E}_{q}(x))^{2}$  \Comment{$x$ means all examples in cluster $q$}
     \State \hspace{0.5cm} $CumErr_{val}$ $\gets$ $CumErr_{val} + Err_{val}$
     \State \hspace{0.5cm} Compute $\mathcal{D} \gets (x_{i} - \mathbf{E}_{q}(x))^{T}\Sigma_{q}^{-1}(x_{i} - \mathbf{E}_{q}(x))$
     \State \hspace{0.5cm} If $\mathcal{D}_{q} < \mathcal{C}_{p}$ \Comment{$\mathcal{C}_{p}$ is Chebyshev parameter}
     \State \hspace{1cm} $CRITERION \gets [CRITERION; \mathcal{D}_{q},
     q]$
    \State If more than one cluster that associates to $x_{i}$, i.e $length(CRITERION) \geq 1$
     \State \hspace{0.5cm} Associate  $x_{i}$ to selected cluster $q$ with minimum $\mathcal{D}_{q}$
    \State $Err_{1}$ $\gets$ [$Err_{1}, Err_{val}/pt_{cntr}$]
     \State If $x_{i}$ is not associated with any cluster, i.e $sum(FOUND) == 0$
     \State \hspace{0.5cm} $Err_{2}$ $\gets$ [$Err_{2}, Err_{val}/pt_{cntr}$]
     \State \hspace{0.5cm} $cluster_{cntr}$ $\gets$ $cluster_{cntr} + 1$
     \State \hspace{0.5cm} Using $1$-NN find $x_{j}$ closest to $x_{i}$ s.t. $j \in pt_{idx}$
     \State \hspace{0.5cm} Update $pt_{idx}$ i.e $pt_{idx}$ $\gets$ $pt_{idx} - \{j\}$
     \State \hspace{0.5cm} Form a new cluster $\{x_{i}, x_{j}\}$
\EndWhile
\EndProcedure
\end{algorithmic}
\end{algorithm}
Several important implications arise due to the usage of a
probabilistic inequality measure as a nonconformal measure. These will
be elucidated in detail in the later sections. An important point to
consider here is the usage of $1$-NN algorithm to create a new
cluster. Even though it is known that $1$-NN suffers from the problem
of the curse of dimensionality, for problems with small dimensions, it
can be employed for transductive learning. The aim of the proposed
work is not to address the curse of dimensionality issue. Also, note
that in the general supervised conformal learning algorithm, a
prediction has to be made before the next random example is
processed. This is not the case in the current unsupervised framework
of the conformal learning algorithm. In case the current random
example fails to associate with any of the existing clusters, under
the constraint yielded by the Chebyshev parameter, the NN helps in
finding the closest example (in feature space) from the remaining
unprocessed sample data set, to form a new cluster. Thus the formation
of a new cluster depends on the strictness of the Chebyshev parameter
$\mathcal{C}_{p}$. The procedure for unsupervised conformal learning
is presented in algorithm \ref{alg:final}. It does not strictly follow
the idea of finding confidence on the prediction as labels are not
present to be tested against. The goal here is to reconstruct the
clusters from a single pixel sequence such that they represent the
image. The quality of the reconstruction is taken up later on when a
random sample of pixel sequences are used to estimate the probability
density of the reconstruction error rates. Note that in the algorithm,
$\mathbf{E}_{q}(x)$ represents the mean of the examples $x$ in the
$q^{th}$ cluster and $\Sigma_{p}$ is the covariance matrix of $N$D
feature examples of the $q^{th}$ cluster. \par 
\section{Theoretical Perspective}
\label{sec:tp}
Application of the multivariate Chebyshev
inequality that yields a probabilistic bound enforces certain
important implications with regard to the clusters that are
generated. For the purpose of elucidation of the algorithm, the
starfish image is taken from \cite{Martin:2001}. 
\begin{figure}[!t]
\begin{center}
\includegraphics[width=11cm,height=14cm]{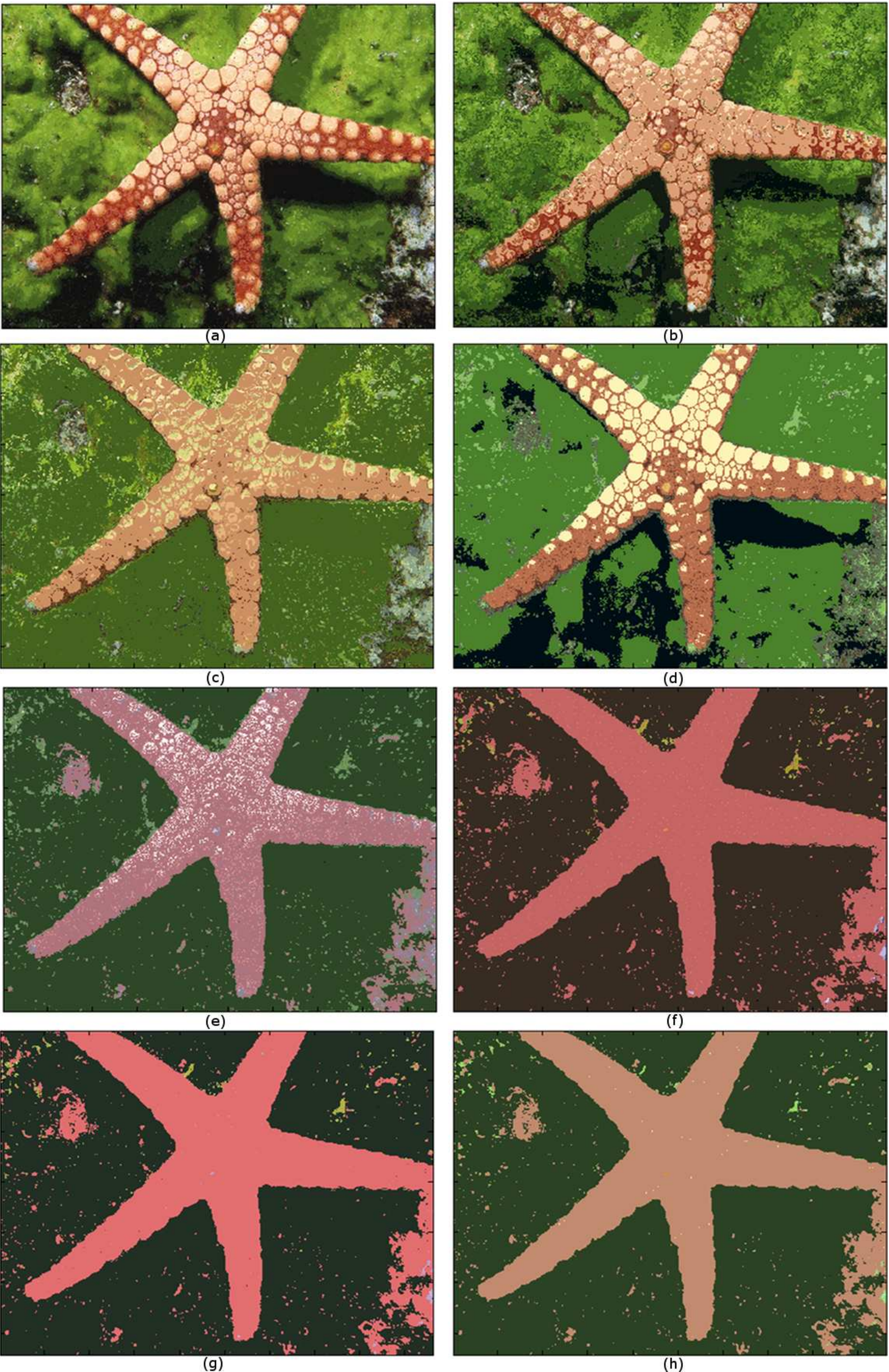}
\caption{A random sequence of Starfish Image segmented via unsupervised conformal learning algorithm. ($\mathcal{C}_{p}$, NoClust, TRErr) represent the tuple containing the Chebyshev paramenter ($\mathcal{C}_{p}$), number of clusters generated (NoClust) while using $\mathcal{C}_{p}$ and the total reconstruction error of the generated image from the original image (TRErr)(a) ($3, 1034, 17.746$), (b) ($5, 271, 36.32$), (c) ($7, 159, 54.71$), (d) ($9, 45, 40.591$), (e) ($11, 31, 62.606$), (f) ($13, 29, 66.061$), (g) ($15, 33, 65.424$), (h) ($17, 24, 64.98$).}
\label{fig:clust-starfish}
\end{center}
\end{figure}
\subsection{Multivariate Chebyshev Inequality}\label{sec:mci}
Let $X$ be a stochastic variable in $\mathcal{N}$ dimensions with a mean $E[X]$. Further, $\Sigma$ be the covariance matrix of all observations, each containing $\mathcal{N}$ features and $\mathcal{C}_{p} \in \mathcal{R}$, then the multivariate Chebyshev Inequality in \cite{Chen:2007} states that: 
\begin{eqnarray}
\mathcal{P} \{(X - E[X])^{T} \Sigma^{-1} (X - E[X]) \geq \mathcal{C}_{p}\} & \leq &
\frac{\mathcal{N}}{\mathcal{C}_{p}} \nonumber\\
\mathcal{P}\{(X - E[X])^{T} \Sigma^{-1} (X - E[X]) < \mathcal{C}_{p}\} & \geq &
1 - \frac{\mathcal{N}}{\mathcal{C}_{p}} \nonumber\\
\label{equ:ti}
\end{eqnarray}
i.e. the probability of the spread of the value of $X$ around the
sample mean $E[X]$ being greater than $\mathcal{C}_{p}$, is less than
$\mathcal{N}/\mathcal{C}_{p}$. There is a minor variation for the
univariate case stating that the probability of the spread of the
value of $x$ around the mean $\mu$ being greater than
$\mathcal{C}_{p}\sigma$ is less than $1/\mathcal{C}_{p}^{2}$. Apart
from the minor difference, both formulations convey the same message
about the probabilistic bound imposed when a random vector or number
$X$ lies outside the mean of the sample by a value of
$\mathcal{C}_{p}$. \par
\subsection{Association to Clusters}
Once a cluster is initialized (say with $x_{i}$ and $x_{j}$), the size
of the cluster depends on the number of examples getting associated
with it. The multivariate formalism of the Chebyshev inequality
controls the degree of uniformity of feature values of examples that
constitute the cluster. The association of the example to a cluster
happens as follows: Let the new random example (say $x_{t}$) be
considered for checking the association to a cluster. If the spread of
example $x_{t}$ from $\mathbf{E}_{q}(x)$ (the mean of the
\textbf{$q^{th}$} cluster $\{x_{i}, x_{j}\}$), factored by the
covariance matrix $\Sigma_{q}$, is below $\mathcal{C}_{p}$, then
$x_{t}$ is considered as a part of the cluster. Using Chebyshev
inequality, it boils down to: 
\begin{eqnarray}
\mathcal{P} \{(x_{t} - \mathbf{E}_{q}[x_{i}, x_{j}])^{T} \Sigma_{q}^{-1} (x_{t} -
\mathbf{E}_{q}[x_{i}, x_{j}]) \geq \mathcal{C}_{p}\} & \leq &
\frac{\mathcal{N}}{\mathcal{C}_{p}} \nonumber\\
\mathcal{P} \{(x_{t} - \mathbf{E}_{q}[x_{i}, x_{j}])^{T} \Sigma_{q}^{-1} (x_{t} -
\mathbf{E}_{q}[x_{i}, x_{j}]) < \mathcal{C}_{p}\} & \geq & 1 -
\frac{\mathcal{N}}{\mathcal{C}_{p}} \nonumber\\
\label{equ:ti_imp}
\end{eqnarray}
Satisfaction of this criterion suggests a possible cluster to which
$x_{t}$ could be associated. This test is conducted for all the
existing clusters. If there are more than one cluster to which $x_{t}$
can be associated, then the cluster which shows the minimum deviation
from the new random point is chosen. Once the cluster is chosen, its size is extended to
by one more example i.e. $x_{t}$. The cluster now constitutes
$\{x_{i}, x_{j}, x_{t}\}$. If no association is found at all, a new
cluster is initialized and the process repeats until all unseen
examples have been processed. The satisfaction of the inequality gives
a lower probabilistic bound on size of cluster by a value of $1 -
(\mathcal{N}/\mathcal{C}_{p})$, if the second version of the Chebyshev
formula is under consideration. Thus the size of the clusters grow
under a probabilistic constraint in a homogeneous manner. For a highly
inhomogeneous image, a cluster size may be very restricted or small
due to big deviation of pixel intensities from the cluster it is being
tested with. \par
Once the pixels have been assigned to respective decompositions, all
pixels in a single decomposition are assigned the average value of
intensities of pixels that constitute the decomposition. Thus is done
under the assumption that decomposed clusters will be homogeneous in
nature with the degree of homogeneity controlled by
$\mathcal{C}_{p}$. Figure \ref{fig:clust-starfish} shows the results of
clustering for varying values of $\mathcal{C}_{p}$ for the starfish
image from \cite{Martin:2001}. \par 
\subsection{Implications}\label{sec:implications}
In \cite{Sinha:2010} various implications have been proposed for using
multivariate Chebyshev inequality for image representation using space
filling curve. In order to extend on the their work, a few implications are
reiterated for further development. The inequality being a criterion,
the probability associated with the same gives a belief based bound on
the satisfaction of the criterion. In order to proceed, first a
definition of \emph{Decomposition} is needed. 
\begin{definition}
Let $\mathcal{D}$ be a \textbf{decomposition} which contains a set of
points $x$ with a mean of $\mathbf{E}_{q}(x)$. The set expands by
testing a new point $x_{t}$ via the Chebyshev inequality $\mathcal{P}
\{(x_{t} - \mathbf{E}_{q}(x))^{T}$ $\Sigma_{q}^{-1}$ $(x_{t} -
\mathbf{E}_{q}(x)) < \mathcal{C}_{p}\}$ $\geq 1 -
\frac{\mathcal{N}}{\mathcal{C}_{p}}$.
\end{definition}
The decomposition may include the point $x_{t}$ depending on the
outcome of the criterion. A point to be noted is that, if the new
point $x_{t}$ belongs to $\mathcal{D}$, then D can be represented as
$(x_{t} - \mathbf{E}_{q}(x))^{T}$ $\Sigma_{q}^{-1}$ $(x_{t} -
\mathbf{E}_{q}(x))$. 
\begin{lemma}
Decompositions $\mathcal{D}$ are bounded by lower probability bound of
$1 - (\mathcal{N}/\mathcal{C}_{p})$. \label{lem:01}
\end{lemma}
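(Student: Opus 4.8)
The plan is to obtain the stated bound as a direct specialization of the multivariate Chebyshev inequality recorded in Section~\ref{sec:mci} to the event that defines membership in a decomposition. First I would recall from the \textbf{Definition} above that a decomposition $\mathcal{D}$ expands by admitting a candidate point $x_{t}$ precisely when the quadratic form $(x_{t} - \mathbf{E}_{q}(x))^{T}\Sigma_{q}^{-1}(x_{t} - \mathbf{E}_{q}(x))$ falls strictly below the Chebyshev parameter $\mathcal{C}_{p}$. Thus the event ``$x_{t}$ is absorbed into $\mathcal{D}$'' is exactly the event appearing on the left-hand side of the second line of (\ref{equ:ti}), and the whole argument reduces to transporting the inequality from the generic random vector $X$ to the tested example $x_{t}$.

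Next I would identify the random vector $X$ of the generic inequality with the candidate example $x_{t}$, the mean $E[X]$ with the cluster mean $\mathbf{E}_{q}(x)$, and the covariance $\Sigma$ with the cluster covariance $\Sigma_{q}$. Substituting these into the second line of (\ref{equ:ti}),
\[
\mathcal{P}\{(X - E[X])^{T}\Sigma^{-1}(X - E[X]) < \mathcal{C}_{p}\} \geq 1 - \frac{\mathcal{N}}{\mathcal{C}_{p}},
\]
immediately yields that the probability of a point being admitted into $\mathcal{D}$ is at least $1 - (\mathcal{N}/\mathcal{C}_{p})$. Because every point constituting $\mathcal{D}$ was admitted through exactly this test, the decomposition inherits this lower probabilistic guarantee, which is precisely the claim of Lemma~\ref{lem:01}.

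The only point requiring a little care — and what I expect to be the main obstacle — is the passage from a per-point admission bound to a statement about the decomposition as an aggregate object. I would argue that $1 - (\mathcal{N}/\mathcal{C}_{p})$ is a property of the ellipsoidal acceptance region itself, namely the set $\{x : (x - \mathbf{E}_{q}(x))^{T}\Sigma_{q}^{-1}(x - \mathbf{E}_{q}(x)) < \mathcal{C}_{p}\}$, rather than of any individual trial, so that it holds uniformly across all admission events and hence bounds the confidence attached to $\mathcal{D}$ as a whole. I would also make explicit the implicit restriction $\mathcal{C}_{p} > \mathcal{N}$, without which the bound $1 - (\mathcal{N}/\mathcal{C}_{p})$ is non-positive and therefore vacuous; stating this ensures the lower bound is genuinely informative.
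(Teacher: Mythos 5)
Your proposal is correct and matches the paper's intent exactly: the paper offers no explicit proof of Lemma~\ref{lem:01}, treating it as an immediate consequence of the Definition of a decomposition together with the second line of the inequality in Section~\ref{sec:mci}, which is precisely the substitution $X \mapsto x_{t}$, $E[X] \mapsto \mathbf{E}_{q}(x)$, $\Sigma \mapsto \Sigma_{q}$ that you carry out. Your added remark that $\mathcal{C}_{p} > \mathcal{N}$ is needed for the bound to be non-vacuous is a sensible clarification consistent with the paper's own Lemma~\ref{lem:03}.
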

\begin{lemma}
The value of $\mathcal{C}_{p}$ reduces the size of the sample from
$\mathcal{M}$ to an upper bound of $\mathcal{M}/\mathcal{C}_{p}$
probabilistically with a lower bound of $1 -
(\mathcal{N}/\mathcal{C}_{p})$. Here $\mathcal{M}$ is the number of
examples in the image. \label{lem:02}
\end{lemma}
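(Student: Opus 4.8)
The plan is to reduce the statement to a direct counting consequence of the multivariate Chebyshev bound of Section~\ref{sec:mci} together with Lemma~\ref{lem:01}, exploiting the assumption recorded in Section~\ref{sec:intro} that each of the $\mathcal{M}$ pixels is drawn as an example with equal probability $1/\mathcal{M}$. First I would fix a single decomposition $\mathcal{D}$ with mean $\mathbf{E}_{q}(x)$ and covariance $\Sigma_{q}$, and regard the Mahalanobis score $Y = (X - \mathbf{E}_{q}(x))^{T}\Sigma_{q}^{-1}(X - \mathbf{E}_{q}(x))$ as a nonnegative random variable induced by drawing $X$ uniformly from the $\mathcal{M}$ examples. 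By~(\ref{equ:ti}) the event $\{Y \geq \mathcal{C}_{p}\}$ has probability at most $\mathcal{N}/\mathcal{C}_{p}$, which is exactly the non-conformity side of Lemma~\ref{lem:01}.

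Next I would pass from this per-example probability to a count over the whole sample. Because the draw is uniform, $\mathcal{P}\{Y \geq \mathcal{C}_{p}\}$ is precisely the fraction of the $\mathcal{M}$ examples whose score clears the threshold, so the number of examples lying outside $\mathcal{D}$ (those that survive as a reduced sample to be absorbed by later decompositions or by the $1$-NN step) is at most $\mathcal{M}\cdot\mathcal{P}\{Y \geq \mathcal{C}_{p}\}$. Substituting the Chebyshev bound collapses the surviving sample from $\mathcal{M}$ to the claimed order $\mathcal{M}/\mathcal{C}_{p}$, while the complementary estimate $1 - \mathcal{N}/\mathcal{C}_{p}$ of Lemma~\ref{lem:01} supplies the stated lower probabilistic guarantee that a given example is retained by $\mathcal{D}$. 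A cleaner route to the same count, free of any independence assumption, is to use the trace identity $\sum_{k=1}^{\mathcal{M}}(x_{k}-\mathbf{E}_{q}(x))^{T}\Sigma_{q}^{-1}(x_{k}-\mathbf{E}_{q}(x)) = \mathcal{M}\,\mathrm{tr}(\Sigma_{q}^{-1}\Sigma_{q}) = \mathcal{M}\mathcal{N}$, valid when $\Sigma_{q}$ is the covariance of the very points being summed, and then apply Markov's inequality to the empirical distribution of scores.

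The hard part will be pinning down the exact constant. The Markov/trace argument rigorously delivers a surviving count of at most $\mathcal{M}\mathcal{N}/\mathcal{C}_{p}$, which carries the dimension factor $\mathcal{N}$, whereas the statement asserts $\mathcal{M}/\mathcal{C}_{p}$; I would therefore either absorb $\mathcal{N}$ (here $\mathcal{N}=3$) as a fixed constant of the ambient feature space, or report the dimension-normalized bound $\mathcal{M}\mathcal{N}/\mathcal{C}_{p}$ and emphasize that it is the reduction factor $1/\mathcal{C}_{p}$ that drives the shrinkage. A secondary point demanding care is that the trace identity holds exactly only when $\Sigma_{q}$ is the covariance of the points entering the sum, so the count is most naturally phrased against a decomposition whose statistics are those of the full sample; I would also restrict attention to decompositions with invertible $\Sigma_{q}$, since a freshly seeded two-point cluster can yield a rank-deficient covariance, in keeping with the well-definedness already presumed when $\Sigma_{q}^{-1}$ is written in~(\ref{equ:ti_imp}).
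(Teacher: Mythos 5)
The paper states Lemma~\ref{lem:02} without any proof --- it appears in Section~\ref{sec:implications} as one of several ``implications'' reiterated from prior work --- so there is no argument of the paper's own to compare yours against; your proposal must be judged on its own merits. Read that way, your reconstruction is the natural one, and your self-diagnosis of the constant is exactly right: what the Chebyshev/Markov step actually delivers is a count of at most $\mathcal{M}\mathcal{N}/\mathcal{C}_{p}$, and the lemma's $\mathcal{M}/\mathcal{C}_{p}$ is recovered only by absorbing the dimension $\mathcal{N}$ into a constant. One loop you should close explicitly: the paper later invokes this lemma to claim that ``the number of decompositions or clusters is upper bounded,'' so the quantity to count is not the residue left by one decomposition but the number of examples that fail the test against \emph{every} existing cluster, since each such example seeds exactly one new cluster (with its $1$-NN partner) in Algorithm~\ref{alg:final}. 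Your per-cluster count does bound that, because failing against all clusters is a sub-event of failing against any fixed one, but the proposal leaves this identification implicit.

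The issue you label ``secondary'' is in fact the principal gap, and it is a gap in the lemma itself rather than merely in your write-up. Inequality~(\ref{equ:ti}) bounds the tail of $(X - E[X])^{T}\Sigma^{-1}(X - E[X])$ only when $E[X]$ and $\Sigma$ are the mean and covariance of the distribution from which $X$ is drawn. In the algorithm the score is computed against $\mathbf{E}_{q}(x)$ and $\Sigma_{q}$ of a proper sub-cluster, while the example is drawn uniformly from all $\mathcal{M}$ pixels; those are not the moments of that uniform draw, so neither the Chebyshev bound nor your trace identity $\sum_{k}(x_{k}-\mathbf{E}_{q}(x))^{T}\Sigma_{q}^{-1}(x_{k}-\mathbf{E}_{q}(x)) = \mathcal{M}\mathcal{N}$ applies to the per-cluster test the algorithm actually performs. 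The bound is rigorous only in the degenerate case where the decomposition's statistics coincide with those of the full sample, which is precisely the case in which no reduction occurs. Your caveats about rank-deficient two-point covariances and the equal-likelihood sampling assumption of Section~\ref{sec:intro} are well placed, but a complete proof would have to either restrict the claim to full-sample statistics or supply an additional argument --- absent from the paper --- relating the sub-cluster moments to the population moments.
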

\begin{lemma}
As $\mathcal{C}_{p} \rightarrow \mathcal{N}$ the lower probability
bound drops to zero, implying large number of small decompositions
$\mathcal{D}$ can be achieved. Vice versa for $\mathcal{C}_{p}
\rightarrow \infty$. \label{lem:03}
\end{lemma}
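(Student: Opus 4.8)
The plan is to reduce the statement to the monotone behaviour of the scalar function $f(\mathcal{C}_{p}) = 1 - \frac{\mathcal{N}}{\mathcal{C}_{p}}$, which by Lemma \ref{lem:01} is exactly the lower probability bound governing membership in any decomposition $\mathcal{D}$. First I would fix the admissible domain: for the bound to carry information it must be nonnegative, so one works on $\mathcal{C}_{p} > \mathcal{N}$, since for $\mathcal{C}_{p} \leq \mathcal{N}$ the inequality $1 - \mathcal{N}/\mathcal{C}_{p} \leq 0$ is vacuous. On this domain I would record that $f$ is strictly increasing, which follows at once from $f'(\mathcal{C}_{p}) = \mathcal{N}/\mathcal{C}_{p}^{2} > 0$. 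This monotonicity is what licenses the ``vice versa'' phrasing, because it guarantees that the two limiting regimes bracket all intermediate behaviour of the bound.

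Next I would compute the two limits directly. As $\mathcal{C}_{p} \to \mathcal{N}^{+}$ we obtain $f(\mathcal{C}_{p}) \to 1 - \mathcal{N}/\mathcal{N} = 0$, so the lower probability bound collapses to zero; as $\mathcal{C}_{p} \to \infty$ we have $\mathcal{N}/\mathcal{C}_{p} \to 0$ and hence $f(\mathcal{C}_{p}) \to 1$. Both are elementary limits of a rational function, so this analytic core is immediate and is not where the real content of the lemma resides.

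The substantive step is translating these limits into the asserted qualitative behaviour of the decomposition set, and here I would argue along two complementary lines. Probabilistically, when $f(\mathcal{C}_{p}) \to 0$ the Chebyshev membership criterion of the decomposition definition furnishes essentially no guarantee that a fresh example $x_{t}$ is absorbed into an existing $\mathcal{D}$; in Algorithm \ref{alg:final} every such failure routes $x_{t}$ through the $1$-NN branch and spawns a new two-point cluster, so repeated failures produce a proliferation of small decompositions. Conversely, as $f(\mathcal{C}_{p}) \to 1$ almost every example passes the test and is merged into an existing cluster, yielding few but large decompositions. Geometrically the same conclusion follows by noting that the acceptance region $(x_{t} - \mathbf{E}_{q}(x))^{T} \Sigma_{q}^{-1} (x_{t} - \mathbf{E}_{q}(x)) < \mathcal{C}_{p}$ is an ellipsoid whose volume scales like $\mathcal{C}_{p}^{\mathcal{N}/2}$: driving $\mathcal{C}_{p}$ toward $\mathcal{N}$ contracts the ellipsoid toward the mean and admits fewer points, whereas enlarging $\mathcal{C}_{p}$ inflates it until it eventually covers the data.

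The main obstacle I anticipate is precisely this last translation: the limits are trivial, but connecting a vanishing or saturating probability bound to a rigorous statement about the \emph{cardinality} and \emph{size} of the decompositions requires coupling the bound to the branching logic of Algorithm \ref{alg:final} and to the ellipsoidal acceptance geometry. I would make the link as tight as possible by pairing the monotonicity of $f$ with Lemma \ref{lem:02}, whose cap $\mathcal{M}/\mathcal{C}_{p}$ on the count of deviating examples grows as $\mathcal{C}_{p} \to \mathcal{N}$ and vanishes as $\mathcal{C}_{p} \to \infty$; since deviating examples are exactly the ones that seed new clusters, this reinforces the count-based reading and matches the decomposition behaviour claimed in the lemma.
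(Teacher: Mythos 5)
Your proposal is correct, but it is worth noting that the paper itself supplies no proof of this lemma at all: Lemma \ref{lem:03} is simply asserted as a consequence of the form of the Chebyshev bound $1 - \mathcal{N}/\mathcal{C}_{p}$, and the surrounding discussion in Section \ref{sec:implications} treats the link to cluster counts as self-evident. Your argument therefore goes further than the source. The analytic core (monotonicity of $f(\mathcal{C}_{p}) = 1 - \mathcal{N}/\mathcal{C}_{p}$ on $\mathcal{C}_{p} > \mathcal{N}$ and the two limits $f \to 0$ and $f \to 1$) matches exactly what the paper implicitly relies on. Where you add value is in recognizing that the limit of the \emph{lower bound} alone proves nothing in the $\mathcal{C}_{p} \to \mathcal{N}$ direction --- a vacuous lower bound does not force the acceptance probability to be small --- and in supplying the two mechanisms that actually carry the conclusion: the shrinking acceptance ellipsoid $(x_{t} - \mathbf{E}_{q}(x))^{T}\Sigma_{q}^{-1}(x_{t} - \mathbf{E}_{q}(x)) < \mathcal{C}_{p}$, whose volume scales as $\mathcal{C}_{p}^{\mathcal{N}/2}$, and the branching logic of Algorithm \ref{alg:final}, where every rejection spawns a new two-point cluster. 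By contrast, in the $\mathcal{C}_{p} \to \infty$ direction the probabilistic bound \emph{is} doing real work (the acceptance probability genuinely tends to one), so the asymmetry you implicitly exploit is the right one. Your coupling to Lemma \ref{lem:02} is consistent with the paper's own reading of that lemma. The only caution is that the lemma's conclusion remains qualitative (``can be achieved''), so your argument, like the paper's assertion, stops short of a quantitative count of clusters; that is the correct level of rigor for what is being claimed.
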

It was stated that the image can be reconstructed from pixel sequences
at a certain level of compromise. From lemma \ref{lem:02}, it can be
seen that $\mathcal{C}_{p}$ reduces the sample size while inducing a
certain amount of error due to loss of information via averaging. This
reduction in sample size indicates the level of compromise at which
the image is to processed. This reduction in sample size or level of
compromise is directly related to the construction of probabilistically bounded
decompositions also. Since the decompositions are generated via the
usage of $\mathcal{C}_{p}$ in equation \ref{equ:ti}, the belief of
their existence in terms of a lower probability bound (from lemma
\ref{lem:01}) suggests a confidence in the amount of error incurred in
reconstruction of the image. For a particular pixel, this
reconstruction error can be computed by squaring the difference
between the value of the intensity in the original image and the
intensity value assigned after clustering. Since a somewhat
homogeneous decomposition is bounded probabilistically, the
reconstruction error of pixels that constitute it are also bounded
probabilistically. Thus for all decompositions, the summation of reconstruction
errors for all pixels is bounded. The bound indicates the confidence in
the generated reconstruction error. Also, by lemma \ref{lem:02}, since
the number of decompositions or clusters is upper bounded, the total
reconstruction error is also upper bounded. It now remains to be
proven that for a particular level of compromise, the error rates
converge as the number of processed examples and the number of
clusters increase. \par
In algorithm \ref{alg:final}, three error rates are computed as the
random sequence of examples get processed. For each original pixel $x_{i} \in
\mathcal{R}^{\mathcal{N}}$ in the image, let $x_{i}^{R}$ be the intensity
value assigned after clustering. Then the reconstruction error for
pixel $x_{i}$ is norm-2 $||x_{i} - x_{i}^{R}||_{2}$. Since a pixel
is assigned to a particular decomposition $\mathcal{D}_{q}$, it gets a
value of the mean of the all pixels that constitute the decomposition
$\mathcal{D}_{q}$. Thus the reconstruction error for a pixel turns out
to be $||x_{i} - \mathbf{E}_{q}(x)||_{2}$. For each cluster $q$, the
reconstruction error is $Err_{\mathcal{D}_{q}} = \sum_{i = 1}^{n} || x_{i} -
\mathbf{E}_{q}(x)||_{2}$. Note that the error also indicates how much
the examples deviate from the mean of their respective cluster. As new
examples are processed based on the information present from the
previous examples, the total error computed at after processing the
first $pt_{cntr}$ examples in a random sequence is $Err_{val} =
\sum_{q = 1}^{cluster_{cntr}} Err_{\mathcal{D}_{q}}$. The error rate
for these $pt_{cntr}$ examples is $Err_{1} =
Err_{val}/pt_{cntr}$. Finally, an error rate is computed that captures
how the deviation of the examples from their respective cluster means
happen, after the formation of a new cluster. This error is denoted by
$Err_{2}$. The formula for $Err_{2}$ is the same as $Err_{1}$ but
with a minute change in conception. The $Err_{val}$ are divided by the
total number of point processed after the formation of every new cluster. \par
\begin{figure}[!t]
\begin{center}
\includegraphics[width=8.5cm,height=6.5cm]{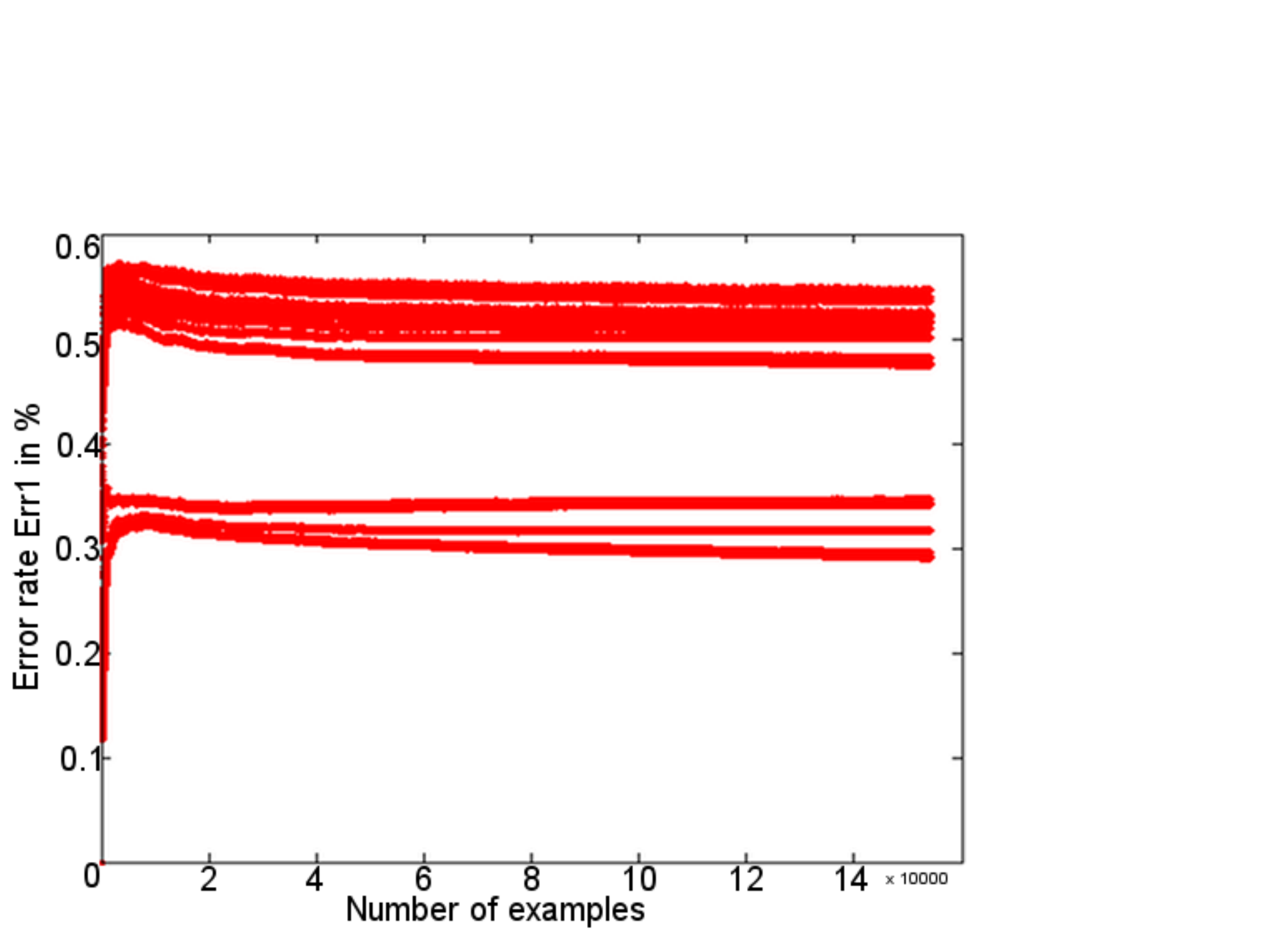}
\caption{Error rate $Err_{1}$ for a particular sequence with
  increasing number of examples with $\mathcal{C}_{p} = 7$.}
\label{fig:Err_1}
\end{center}
\end{figure}
\begin{theorem}
Let $\mathcal{Z}_{i}$ be a random sequence that represents the entire
image $\mathcal{I}$. If $\mathcal{Z}_{i}$ is decomposed into clusters
via the Chebyshev Inequality using the unsupervised learner, then the
reconstruction error rate $Err_{1}$ converges asymptotically with a probabilistically
lower bound or confidence level of $1 - \mathcal{N}/\mathcal{C}_{p}$
or greater. \label{thm:03}
\end{theorem}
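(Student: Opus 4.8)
The plan is to rewrite $Err_{1}$ as a running average of per-pixel reconstruction errors and then control that average in two stages: first show each summand is uniformly bounded (this is where the Chebyshev criterion, and hence the confidence level $1 - \mathcal{N}/\mathcal{C}_{p}$, enters), and then show the running average settles to a finite limit as the number of processed examples grows. Writing $t = pt_{cntr}$ and $S(t) = Err_{val}$ after $t$ examples, we have $Err_{1}(t) = S(t)/t$ with $S(t) = \sum_{q=1}^{cluster_{cntr}} Err_{\mathcal{D}_{q}} = \sum_{q}\sum_{i \in \mathcal{D}_{q}} ||x_{i} - \mathbf{E}_{q}(x)||_{2}$, so that $Err_{1}$ is exactly the mean Euclidean deviation of the processed pixels from their assigned cluster means.

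The first step is to bound a single summand. Whenever a pixel $x_{i}$ is admitted to cluster $q$ the learner has enforced $(x_{i} - \mathbf{E}_{q}(x))^{T}\Sigma_{q}^{-1}(x_{i} - \mathbf{E}_{q}(x)) < \mathcal{C}_{p}$, and by Lemma \ref{lem:01} this holds with lower probability $1 - \mathcal{N}/\mathcal{C}_{p}$. Substituting $x_{i} - \mathbf{E}_{q}(x) = \Sigma_{q}^{1/2} w$ gives $||x_{i} - \mathbf{E}_{q}(x)||_{2}^{2} \leq \lambda_{\max}(\Sigma_{q})\,(x_{i} - \mathbf{E}_{q}(x))^{T}\Sigma_{q}^{-1}(x_{i} - \mathbf{E}_{q}(x)) < \lambda_{\max}(\Sigma_{q})\,\mathcal{C}_{p}$. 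Because $\mathcal{I}$ is a finite RGB image, its intensities, cluster means and covariances are all bounded, so there is a global constant $\lambda^{\ast}$ with $\lambda_{\max}(\Sigma_{q}) \leq \lambda^{\ast}$ for every cluster; hence each per-pixel error is at most $B := \sqrt{\lambda^{\ast}\mathcal{C}_{p}}$, all certified with confidence at least $1 - \mathcal{N}/\mathcal{C}_{p}$. Summing and dividing by $t$ then yields $0 \leq Err_{1}(t) \leq B$, so the error rate is a bounded sequence.

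The second step upgrades boundedness to convergence. A short computation gives $Err_{1}(t) - Err_{1}(t-1) = \bigl[(t-1)(S(t)-S(t-1)) - S(t-1)\bigr]/\bigl(t(t-1)\bigr)$, so with the increment $S(t)-S(t-1)$ and the partial sum $S(t-1)$ both bounded in terms of $B$ one obtains $|Err_{1}(t) - Err_{1}(t-1)| = O(1/t)$, i.e. the successive changes of the error rate vanish. To conclude a genuine limit I would invoke the Stolz--Ces\`aro theorem: it suffices that the marginal increment $S(t)-S(t-1)$ approaches a limit $L \leq B$, and this stabilisation is exactly what exchangeability of the pixel sequence $\mathcal{Z}_{i}$ (inherited from the conformal setting) supplies, since every random ordering represents the same image and the empirical distribution of per-pixel deviations converges. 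Once $S(t)-S(t-1) \to L$, Stolz--Ces\`aro forces $Err_{1}(t) = S(t)/t \to L$, and since every admission was certified only with probability $1 - \mathcal{N}/\mathcal{C}_{p}$, the convergence inherits that confidence level or greater.

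The main obstacle is this second step. Boundedness of $Err_{1}$ is immediate, but passing to an actual limit is delicate because recomputing $\mathbf{E}_{q}(x)$ at each admission perturbs the errors of \emph{all} pixels previously assigned to that cluster, so $S(t)$ is not a sum of frozen terms and $S(t)-S(t-1)$ is not obviously governed by a single new-point contribution. Finiteness of the image rescues the increment estimate — every deviation is at most the diameter of the RGB intensity range, so $S(t)-S(t-1)$ stays bounded and the $O(1/t)$ bound survives — but converting vanishing increments into a true limit still requires that $S(t)-S(t-1)$ settles, which is precisely the exchangeability-driven stabilisation of the cluster means. Making that stabilisation quantitative is the crux; a safe fallback is Ces\`aro/Stolz convergence of the increments, which already forces $Err_{1}(t)$ to a finite value, certified through the Chebyshev bound $B$ with confidence $1 - \mathcal{N}/\mathcal{C}_{p}$ or greater.
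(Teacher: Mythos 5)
Your proposal follows essentially the same skeleton as the paper's proof --- use the Chebyshev admission criterion $(x_{i} - \mathbf{E}_{q}(x))^{T}\Sigma_{q}^{-1}(x_{i} - \mathbf{E}_{q}(x)) < \mathcal{C}_{p}$ to bound each per-pixel deviation, sum over clusters, and divide by $pt_{cntr}$ --- but the two arguments diverge in how the Mahalanobis bound is converted to a Euclidean one, and in how seriously the word ``converges'' is treated. The paper invokes near-homogeneity of the decompositions to approximate $\Sigma_{q}^{-1} \approx \det|\Sigma_{q}^{-1}|\,\mathbf{I}$ and deduces the per-pixel bound $\mathcal{C}_{p}/\det|\Sigma_{q}^{-1}|$, with worst case $\mathcal{C}_{p}/\det|\Sigma_{lowest}^{-1}|$; your eigenvalue inequality $||x_{i} - \mathbf{E}_{q}(x)||_{2}^{2} \leq \lambda_{\max}(\Sigma_{q})\,\mathcal{C}_{p}$ is strictly better, since it is an exact consequence of positive definiteness rather than a heuristic diagonal approximation, and your appeal to finiteness of the RGB range to get a global $\lambda^{\ast}$ is sound. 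The second divergence: the paper's formal proof actually stops at uniform boundedness of $Err_{1}$; the passage from boundedness to asymptotic convergence is handled only in the prose following the proof (early cluster formation, later stabilization of means). You attempt to close exactly that gap with Stolz--Ces\`aro, and you correctly diagnose why it is nontrivial --- an $O(1/t)$ bound on successive differences of a bounded sequence does not yield a limit, and the recomputation of $\mathbf{E}_{q}(x)$ means $S(t)$ is not a sum of frozen terms --- but your resolution (exchangeability forces the increments $S(t)-S(t-1)$ to settle) is asserted rather than proved, as you admit. So the net assessment is: what you establish rigorously (boundedness at confidence $1 - \mathcal{N}/\mathcal{C}_{p}$) is everything the paper's own proof establishes, via a cleaner linear-algebra step; the remaining gap in your argument is precisely the gap the paper leaves unacknowledged, and your proposal has the merit of naming it.
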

\begin{proof}
It is known that the total reconstruction error after $pt_{cntr}$
examples have been processed, is $Err_{val} =
\sum_{q = 1}^{cluster_{cntr}} Err_{\mathcal{D}_{q}}$. And the error
rate is $Err_{1} = Err_{val}/pt_{cntr}$. It is also known from
equation \ref{equ:ti} that an example is associated to a particular
decomposition $\mathcal{D}_{q}$ if it satisfies the constraint $(x_{t}
- \mathbf{E}_{q}(x))^{T}$ $\Sigma_{q}^{-1}$ $(x_{t} -
\mathbf{E}_{q}(x)) < \mathcal{C}_{p}$. Since $\mathcal{C}_{p}$ defines
level of compromise on the image via lemma \ref{lem:02} and the
decompositions $\mathcal{D}_{q}$ is almost homogeneous, all examples
that constitute a decomposition have similar attribute values. Due to this
similarity between the attribute values, the non-diagonal elements of
the covariance matrix in the inequality above approach to zero. Thus,
$\Sigma_{q}^{-1} \approx det|\Sigma_{q}^{-1}|\mathbf{I}$, were $\mathbf{I}$ is the identity
matrix. The inequality then equates to: 
\begin{eqnarray}
(x_{t} - \mathbf{E}_{q}(x))^{T} det|\Sigma_{q}^{-1}|\mathbf{I} (x_{t} -
\mathbf{E}_{q}(x)) & \lessapprox & \mathcal{C}_{p} \nonumber \\
det|\Sigma_{q}^{-1}| (x_{t} - \mathbf{E}_{q}(x))^{T} \mathbf{I} (x_{t} -
\mathbf{E}_{q}(x)) & \lessapprox & \mathcal{C}_{p} \nonumber \\
(x_{t} - \mathbf{E}_{q}(x))^{T} \mathbf{I} (x_{t} -
\mathbf{E}_{q}(x)) & \lessapprox & \frac{\mathcal{C}_{p}}{det|\Sigma_{q}^{-1}|} \nonumber \\
||x_{t} - \mathbf{E}_{q}(x))||_{2} & \lessapprox &
\frac{\mathcal{C}_{p}}{det|\Sigma_{q}^{-1}|} \label{equ:exrecbnd} 
\end{eqnarray}
Thus, if $x_{i} = x_{t}$ was the last example to be associated to a
decomposition, the reconstruction error $||x_{i} -
\mathbf{E}_{q}(x)||$ for that example would be
upper bounded be
$\frac{\mathcal{C}_{p}}{det|\Sigma_{q}^{-1}|}$. Consequently, the
total error after processing $pt_{cntr}$ examples is also upper
bounded, i.e.
\begin{eqnarray}
Err_{val} &  &  = \sum_{q = 1}^{cluster_{cntr}} Err_{\mathcal{D}_{q}}
\nonumber \\
&  &  = \sum_{q = 1}^{cluster_{cntr}} \sum_{i = 1}^{n}||x_{i} -
\mathbf{E}_{q}(x)||_{2} \nonumber \\
&  & \lessapprox \sum_{q = 1}^{cluster_{cntr}} \sum_{i =
  1}^{n}\frac{\mathcal{C}_{p}}{det|\Sigma_{q}^{-1}|} \nonumber \\
&  & \lessapprox \sum_{q = 1}^{cluster_{cntr}} \sum_{i =
  1}^{n} \frac{\mathcal{C}_{p}}{det|\Sigma_{q}^{-1}|} \label{equ:errrecbnd}
\end{eqnarray}
Thus the error rate $Err_{1} = Err_{val}/pt_{cntr}$ is also upper
bounded. Different decompositions may have different
$\Sigma_{q}^{-1}$, but in the worst case scenario, if the
decomposition with the lowest covariance is substituted for every
other decompositions, then the upper bound on the error is
$\frac{\mathcal{C}_{p}}{det|\Sigma_{lowest}^{-1}| \times pt_{cntr}}
\Sigma_{q=1}^{clust_{cntr}} \Sigma_{i=1}^{n} (1)$ which equates to
$\frac{\mathcal{C}_{p}}{det|\Sigma_{lowest}^{-1}|}$. \qed
\end{proof}
It is important to note that this error rate converges to a finite
value asymptotically as the number of processed examples
increases. This is because initially when the learner has not seen
enough examples to learn and solidify the knowledge in terms of a
stable mean and variance of decompositions, the error rate $Err_{1}$
increases as new examples are presented. This is attributed to the
fact that new clusters are formed more often in the intial stages, due
to lack of prior knowledge. After a certain time, when large number of
examples have been encountered to help solidify the knowledge or
stabilize the decompositions, then addition of further examples does
not increment the error. This stability of clusters is checked via the
multivariate formulation of the Chebyshev inequality in equation
\ref{equ:ti_imp}. The stability also casues the error rate $Err_{1}$
to stabilize and thus indicate its convergence in a bounded manner
with a probabilistic confidence level. Thus for any value of
$pt_{cntr}$, there exists an upper bound on reconstruction error,
which stabilizes as $pt_{cntr}$ increases. \par
For $\mathcal{C}_{p} = 7$, the image (c) in figure
\ref{fig:clust-starfish} shows the clustered image that is generated
using the unsupervised conformal learning algorithm. Pixels in a
cluster of the generated image have the mean of the cluster as their
intensity value or the label. This holds for all the clusters in the
generated image. The total number of clusters generated for a
particular random sequence was $159$. The error rate $Err_{1}$ is
depicted in figure \ref{fig:Err_1}. \par 
\begin{theorem}
Let $\mathcal{Z}_{i}$ be a random sequence that represents the entire
image $\mathcal{I}$. If $\mathcal{Z}_{i}$ is decomposed into clusters
via the Chebyshev Inequality using the unsupervised learner, then the
reconstruction error rate $Err_{2}$ converges asymptotically with a
probabilistically lower bound or confidence level of $1 -
\mathcal{N}/\mathcal{C}_{p}$ or greater. \label{thm:04} 
\end{theorem}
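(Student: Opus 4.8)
The plan is to mirror the argument used for $Err_{1}$ in Theorem \ref{thm:03}, since $Err_{2}$ shares the same numerator $Err_{val}$ and differs only in \emph{when} the ratio $Err_{val}/pt_{cntr}$ is recorded. Recall from algorithm \ref{alg:final} that an entry is appended to $Err_{2}$ precisely at the instant a new cluster is initialized, so the sequence $Err_{2}$ is indexed by $cluster_{cntr}$ rather than by every processed example; each recorded value is nonetheless of the form $Err_{val}/pt_{cntr}$ evaluated at that cluster-formation event. First I would reuse the per-example bound established in the derivation leading to equation \ref{equ:exrecbnd}: because a decomposition $\mathcal{D}_{q}$ is almost homogeneous, the off-diagonal entries of $\Sigma_{q}$ vanish, giving $\Sigma_{q}^{-1} \approx det|\Sigma_{q}^{-1}|\mathbf{I}$ and hence $||x_{i} - \mathbf{E}_{q}(x)||_{2} \lessapprox \mathcal{C}_{p}/det|\Sigma_{q}^{-1}|$ for the last example associated to $\mathcal{D}_{q}$.

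Second, I would bound $Err_{val}$ exactly as in equation \ref{equ:errrecbnd}, summing the per-example bound over all clusters and their members. The crucial observation for $Err_{2}$ is that the double sum $\sum_{q = 1}^{cluster_{cntr}} \sum_{i = 1}^{n} (1)$ counts every example assigned to some decomposition, which is exactly $pt_{cntr}$. Substituting the worst-case (lowest) covariance $\Sigma_{lowest}$ for every decomposition therefore yields
\begin{eqnarray}
Err_{2} = \frac{Err_{val}}{pt_{cntr}} & \lessapprox & \frac{\mathcal{C}_{p}}{det|\Sigma_{lowest}^{-1}| \times pt_{cntr}} \sum_{q = 1}^{cluster_{cntr}} \sum_{i = 1}^{n} (1) \nonumber \\
& \lessapprox & \frac{\mathcal{C}_{p}}{det|\Sigma_{lowest}^{-1}|}, \nonumber
\end{eqnarray}
the identical finite upper bound obtained for $Err_{1}$, now holding at every cluster-formation event and hence for every index of the $Err_{2}$ sequence.

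Third, I would establish convergence as $cluster_{cntr}$ grows. Since $Err_{2}$ is a subsequence of the bounded quantity $Err_{val}/pt_{cntr}$ sampled only at new-cluster events, and since new clusters are formed ever more rarely once the means and covariances of the decompositions have stabilized (the learner having seen enough examples to solidify its knowledge, as argued in the discussion following Theorem \ref{thm:03}), the index $cluster_{cntr}$ increments more and more slowly while $pt_{cntr}$ continues to climb. The bounded ratio is therefore sampled at increasingly stable configurations, so $Err_{2}$ approaches a finite limit asymptotically. The lower probability bound $1 - \mathcal{N}/\mathcal{C}_{p}$ is inherited directly from the Chebyshev criterion in equation \ref{equ:ti} that governs every association, so the convergence holds with confidence $1 - \mathcal{N}/\mathcal{C}_{p}$ or greater.

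The step I expect to be the main obstacle is the third one: making rigorous the claim that the \emph{rate} of new-cluster formation decays, and thereby controlling the relationship between $cluster_{cntr}$ and $pt_{cntr}$. The upper bound above is unconditional, but genuine convergence of $Err_{2}$, as opposed to mere boundedness, requires quantifying how quickly the decompositions stabilize. In the present treatment this stabilization is argued qualitatively through the homogeneity enforced by $\mathcal{C}_{p}$ and corroborated empirically rather than with an explicit convergence rate.
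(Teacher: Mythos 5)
Your proposal takes essentially the same route as the paper: the paper's own proof of Theorem~\ref{thm:04} is a single sentence stating that the bound on $Err_{2}$ ``follows a proof similar to one presented in theorem \ref{thm:03}'', and you have simply carried out that transfer explicitly, reusing the per-example bound of equation~\ref{equ:exrecbnd}, the summation of equation~\ref{equ:errrecbnd}, and the worst-case substitution of $\Sigma_{lowest}$. Your write-up is in fact more detailed than the paper's, and the obstacle you flag at the end --- that the argument delivers boundedness of $Err_{val}/pt_{cntr}$ but only a qualitative, stabilization-based case for genuine convergence --- is a gap the paper itself leaves open in exactly the same way for both theorems.
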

\begin{figure}[!t]
\begin{center}
\includegraphics[width=8.5cm,height=6.5cm]{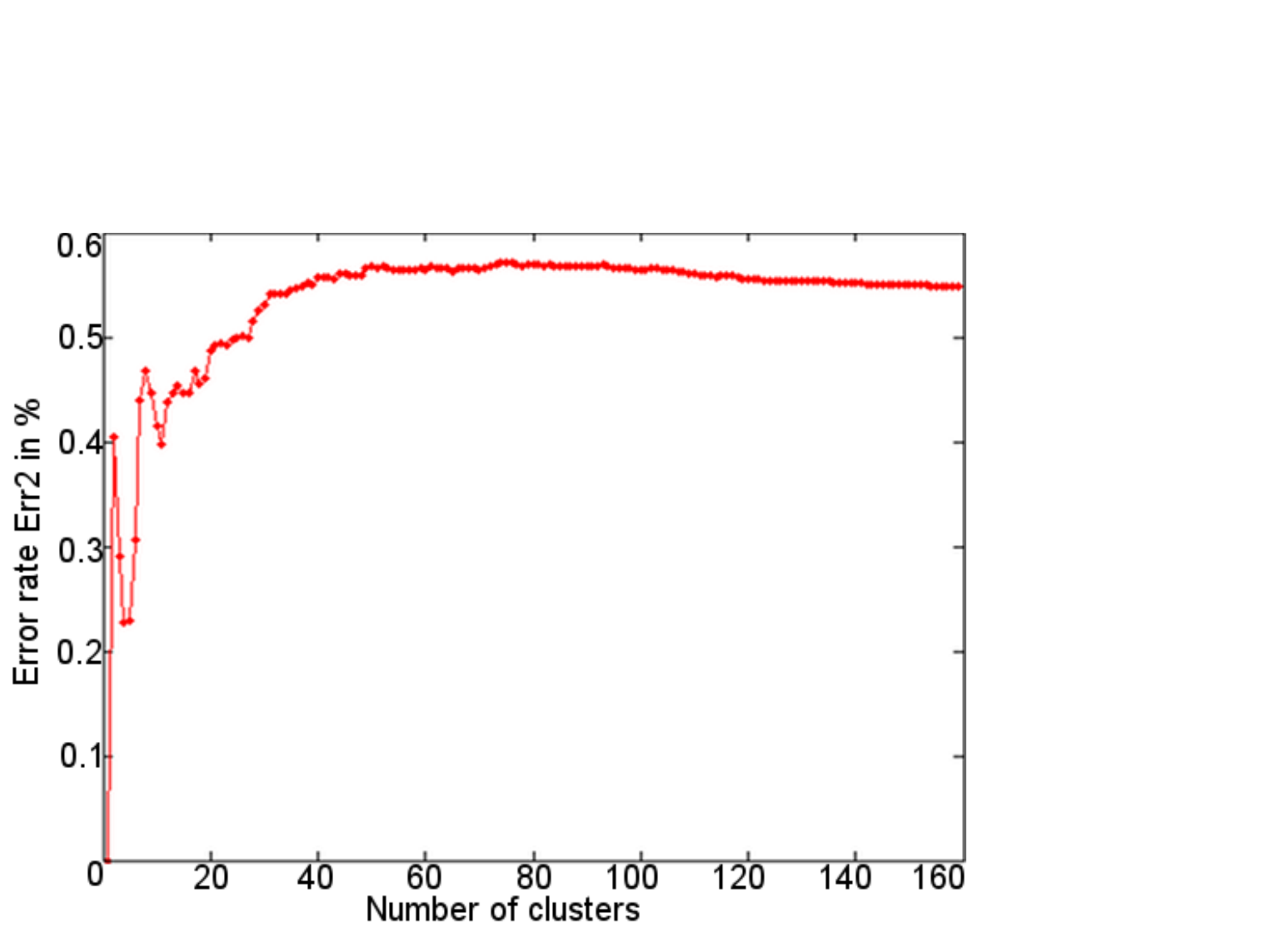}
\caption{Error rate $Err_{2}$ for a particular sequence with increasing number of clusters and $\mathcal{C}_{p} = 7$.}
\label{fig:Err_2}
\end{center}
\end{figure}
\begin{proof}
The error rate $Err_{2}$ is the computation of error after each new
cluster is formed. The upper bound on $Err_{2}$ as
the number of clusters or decompositions increase follows a proof
similar to one presented in theorem \ref{thm:03}.. \qed
\end{proof}
Again for the same $\mathcal{C}_{p} = 7$, the image (c) in figure
\ref{fig:clust-starfish}, the error rate $Err_{2}$ is depicted in
figure \ref{fig:Err_2}. Intuitively, it can be seen that both the
reconstruction error rates converge to an approximately similar
value. \par
The theoretical proofs and the lemmas suggest that, for a given level
of compromise $\mathcal{C}_{p}$ there exists an upper bound on the
reconstruction error as well as the number of clusters. But this
reconstruction error and the number of clusters is dependent on a
pixel sequence presented to the learner. Does this mean that for a
particular level of compromise one may find values of reconstruction
error and number of clusters that may never converge to a finite
value, when a random sample of pixel sequences that represent an
image are processed by the learner? Or in a more simplified way, is it
possible to find a reconstruction error and the number of clusters at
a particular level of compromise that best represents the image? This
points to the problem of whether an image can be reconstructed at a
particular level of compromise where there is a high probability of
finding a low reconstruction error and the number of clusters, from a
sample of sequences. \par
The existence of such a probability value would require the knowledge
of the probability distribution of the reconstruction error over
increasing (1) number of examples and (2) number of clusters
generated. In this work, kernel density estimation (KDE) is used to estimate
the probability distribution of the reconstruction error $Err_{1}$ and
$Err_{2}$. To investigate into the quality of the solution obtained,
the error rates were generated for different random sequences and a
KDE was evaluated on the observations. The density
estimate empirically point to the least error rates with high
probability. It was found that the error rates $Err_{1}$, $Err_{2}$
and the number of clusters, all converge to a particular value, for a
given image. \par 
\begin{figure}[!t]
\begin{center}
\includegraphics[width=7.5cm,height=12cm]{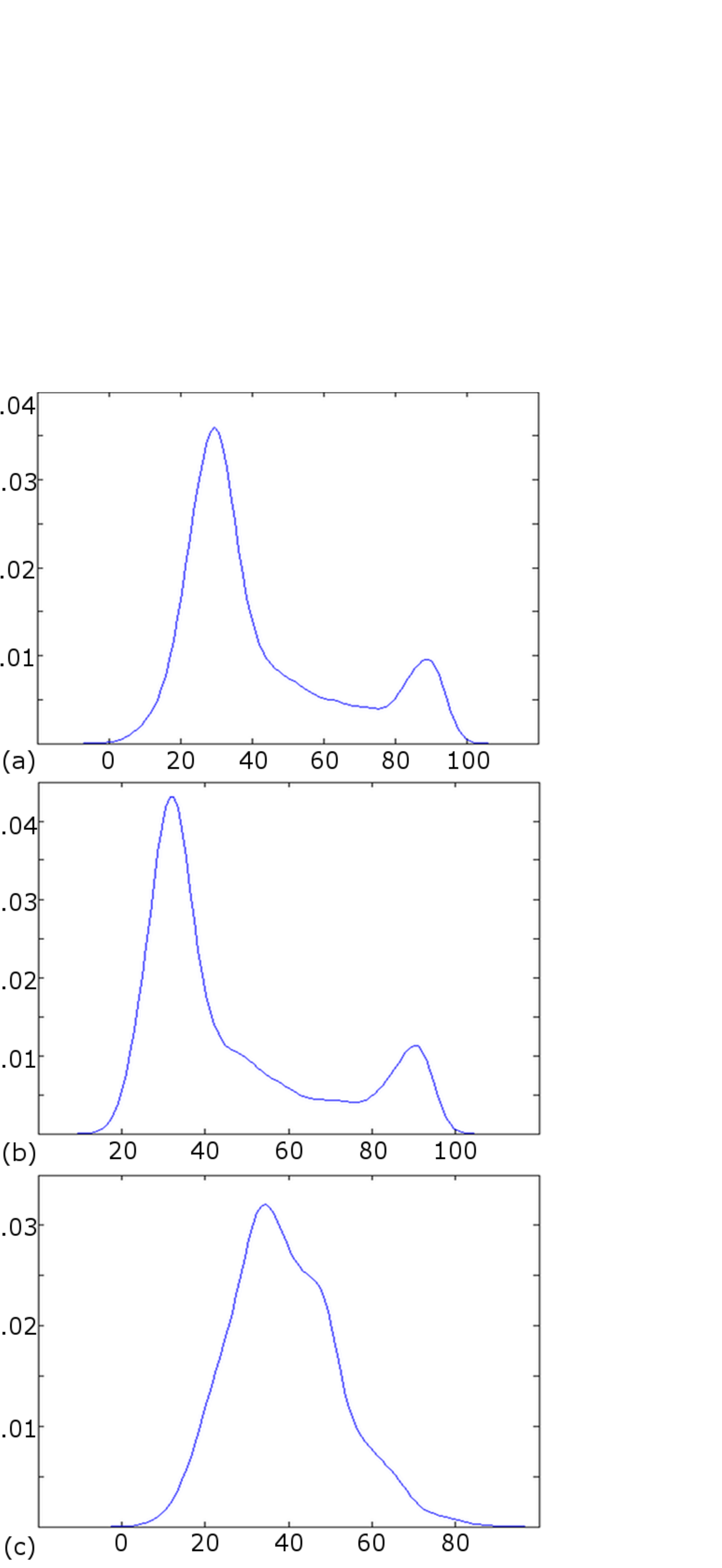}
\caption{The probability density estimates for (a) $Err_{1}$ (b) $Err_{2}$ and (c) the number of clusters obtained via the unsupervised conformal learner, generated over $1000$ random sequences representing the same image with $\mathcal{C}_{p} = 10$.}
\label{fig:kde-Cp-10}
\end{center}
\end{figure}
For $\mathcal{C}_{p} = 10$, the probability density estimates were
generated using the density estimates on error rates and the number of clusters
obtained on $1000$ random sequences of the same image. It was found
that the error rates $Err_{1}$, $Err_{2}$ and the number of clusters
converge to $33.1762$, $35.9339$ and $38$, respectively. Figure
\ref{fig:kde-Cp-10} shows the graphs for the same. It can be seen from graphs (a)
and (b) in figure \ref{fig:kde-Cp-10}, that both $Err_{1}$ and
$Err_{2}$ converge nearly to the similar values. \par 
It can been noted that with increasing value of the parameter
$\mathcal{C}_{p}$, the bound on the decomposition expands which
further leads to generation of lower number of clusters required to
reconstruct the image. Thus it can be expected that at lower levels of
compromise, the reconstruction error (via KDE) is low but the number
of clusters (via KDE) is very high and vice versa. Figure
\ref{fig:ReconstErr_vs_NoD} shows the behaviour of these
reconstruction error and number of clusters generated as the level of
compromise increases. High reconstruction error does not necessarily
mean that the representation of the image is bad. It only suggests the
granularity of reconstruction obtained. Thus the reconstruction of the
image can yield finer details at low level of compromise and point to
segmentations at high level of compromise. Regularization over the
level of compromise and the number of clusters would lead to a
reconstruction which has low reconstruction error as well as adequate
number of decompositions that represent an image properly. \par
\begin{figure}[!t]
\begin{center}
\includegraphics[width=8.5cm,height=5cm]{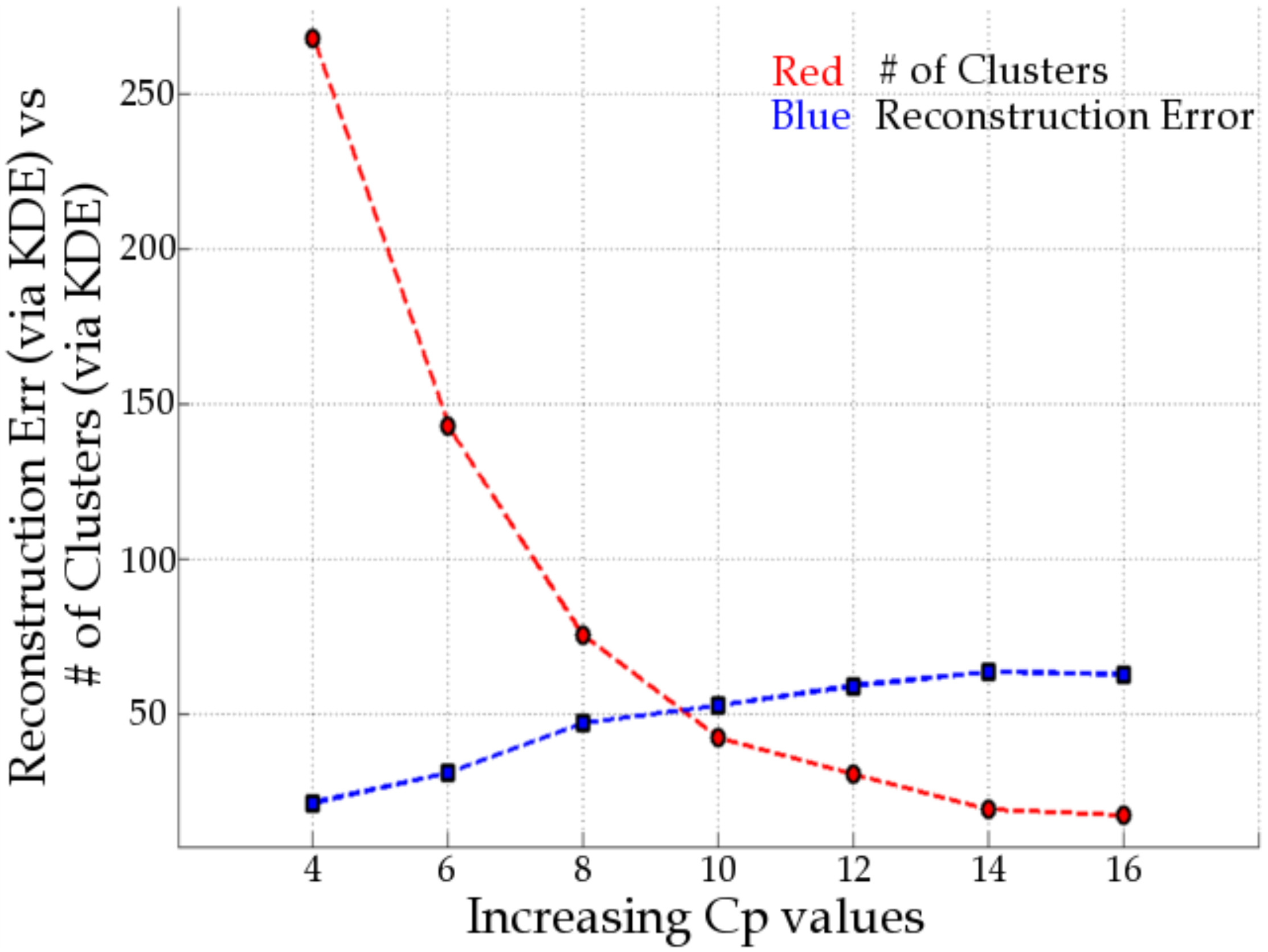}
\caption{Behaviour of reconstruction error (via KDE) and number of
  cluster or decompositions (via KDE) based on increasing values of
  $\mathcal{C}_{p}$.}
\label{fig:ReconstErr_vs_NoD}
\end{center}
\end{figure}
There are a few points that need to be remembered when applying such
an online learning paradigm. The reconstructed results come near to
original image only at a level of imposed compromise. As the size of
dataset or the image increases, the time consumed and the number of
computations involved for processing also increases. To start with,
the learner would perform well in clean images than on noisy
images. Adaptations need to be made for processing noisy images or the
pre-processing would be a necessary step before application of such an
algorithm. Other inequalities can also be taken into account for
multivariate information online. It would be tough to compare the
algorithm with other powerful clustering algorithms as the proposed
work presents a weak learner and provides a general solution with no
tight bounds on the quality of clustering. \par
Nevertheless, the current work contributes to estimation of cluster
number in an unsupervised paradigm using transductive-inductive
learning strategy. It can be said that for a fixed Chebyshev
parameter, in a bootstrapped sequence sampling environment without
replacement, the unsupervised learner converges to a finite error rate
along with the a finite number of clusters. The result in terms of
clustering and the error rates may not be the most optimal (where the
meaning depends on the goal of optimization), but it does give an
affirmative clue that image decomposition is robust and convergent. \par
\section{Conclusion}\label{sec:conclusion}
A simple transductive-inductive learning strategy for unsupervised
learning paradigm is presented with the usage of multivariate
Chebyshev inequality. Theoretical proofs of convergence in number of
clusters for a particular level of compromise show (1) stability of
result over a sequence and (2) robustness of probabilistically
estimated approximation of cluster number over a random sample of
sequences, representing the same multidimensional data. Lastly, upper bounds
generated on the number of clusters point to a limited search
space.
\par
%
%
\par

\bibliographystyle{plain}
\bibliography{arxiv_paperbib}

\begin{thebibliography}{10}

\bibitem{Berge:1938}
PO~Berge.
\newblock A note on a form of tohebycheff's theorem for two variables.
\newblock {\em Biometrika}, 29(3-4):405, 1938.

\bibitem{Bezdek:1981}
J.C. Bezdek.
\newblock {\em Pattern recognition with fuzzy objective function algorithms}.
\newblock Kluwer Academic Publishers, 1981.

\bibitem{Charikar:1997}
M.~Charikar, C.~Chekuri, T.~Feder, and R.~Motwani.
\newblock Incremental clustering and dynamic information retrieval.
\newblock In {\em Proceedings of the twenty-ninth annual ACM symposium on
  Theory of computing}, pages 626--635. ACM, 1997.

\bibitem{Chen:2007}
X.~Chen.
\newblock A new generalization of chebyshev inequality for random vectors.
\newblock {\em arXiv:math.ST}, 0707(0805v1):1--5, 2007.

\bibitem{Dubes:1987}
R.C. Dubes.
\newblock How many clusters are best?-an experiment.
\newblock {\em Pattern Recognition}, 20(6):645--663, 1987.

\bibitem{Fraley:1998}
C.~Fraley and A.E. Raftery.
\newblock How many clusters? which clustering method? answers via model-based
  cluster analysis.
\newblock {\em The computer journal}, 41(8):578, 1998.

\bibitem{Gomes:2008}
R.~Gomes, M.~Welling, and P.~Perona.
\newblock Incremental learning of nonparametric bayesian mixture models.
\newblock In {\em Computer Vision and Pattern Recognition, 2008. CVPR 2008.
  IEEE Conference on}, pages 1--8. Ieee, 2008.

\bibitem{Hilbert:1891}
D.~Hilbert.
\newblock Uber die stetige abbildung einer linie auf ein flachenstuck.
\newblock {\em Math. Ann.}, 38:459–460, 1891.

\bibitem{Kanungo:2002}
T.~Kanungo, D.M. Mount, N.S. Netanyahu, C.D. Piatko, R.~Silverman, and A.Y. Wu.
\newblock An efficient-means clustering algorithm: Analysis and implementation.
\newblock {\em IEEE Transactions on Pattern Analysis and Machine Intelligence},
  pages 881--892, 2002.

\bibitem{Lal:1955}
DN~Lal.
\newblock A note on a form of tchebycheff's inequality for two or more
  variables.
\newblock {\em Sankhy{\=a}: The Indian Journal of Statistics (1933-1960)},
  15(3):317--320, 1955.

\bibitem{Marshall:1960}
A.W. Marshall and I.~Olkin.
\newblock Multivariate chebyshev inequalities.
\newblock {\em The Annals of Mathematical Statistics}, pages 1001--1014, 1960.

\bibitem{Martin:2001}
D.~Martin, C.~Fowlkes, D.~Tal, and J.~Malik.
\newblock A database of human segmented natural images and its application to
  evaluating segmentation algorithms and measuring ecological statistics.
\newblock {\em Proc. 8th Int'l Conf. Computer Vision}, 2:416--423, July 2001.

\bibitem{Monhor:2007}
D.~Monhor.
\newblock A chebyshev inequality for multivariate normal distribution.
\newblock {\em Probability in the Engineering and Informational Sciences},
  21(02):289--300, 2007.

\bibitem{Monhor:2005}
D.~Monhor and S.~Takemoto.
\newblock Understanding the concept of outlier and its relevance to the
  assessment of data quality: Probabilistic background theory.
\newblock {\em Earth, Planets, and Space}, 57(11):1009--1018, 2005.

\bibitem{Shafer:2008}
G.~Shafer and V.~Vovk.
\newblock A tutorial on conformal prediction.
\newblock {\em The Journal of Machine Learning Research}, 9:371--421, 2008.

\bibitem{Sinha:2011}
S.~Sinha and G.~Horst.
\newblock Bounded multivariate surfaces on monovariate internal functions.
\newblock {\em IEEE International Conference on Image Processing},
  (18):1037--1040, 2011.

\bibitem{Still:2004}
S.~Still and W.~Bialek.
\newblock How many clusters? an information-theoretic perspective.
\newblock {\em Neural computation}, 16(12):2483--2506, 2004.

\bibitem{Vovk:2005}
V.~Vovk, A.~Gammerman, and G.~Shafer.
\newblock {\em Algorithmic learning in a random world}.
\newblock Springer Verlag, 2005.

\bibitem{Xu:1996}
L.~Xu.
\newblock How many clusters?: A ying-yang machine based theory for a classical
  open problem in pattern recognition.
\newblock In {\em Neural Networks, 1996., IEEE International Conference on},
  volume~3, pages 1546--1551. IEEE, 1996.

\end{thebibliography}

\end{document}